\newtheorem{theorem}{Theorem}
\newcolumntype{P}[1]{>{\centering\arraybackslash}p{#1}}
\definecolor{orange}{rgb}{1,0.5,0}
\title{Learning from Suboptimal Demonstration via Self-Supervised Reward Regression}
\author{%
  Letian Chen, Rohan Paleja, Matthew Gombolay \\
  School of Interactive Computing \\
  Georgia Institute of Technology \\
  Atlanta, GA 30332 \\
  \texttt{\{letian.chen, rohan.paleja, matthew.gombolay\}@gatech.edu}\\
}
\begin{document}
\maketitle

%===============================================================================

\begin{abstract}
Learning from Demonstration (LfD) seeks to democratize robotics by enabling non-roboticist end-users to teach robots to perform a task by providing a human demonstration. However, modern LfD techniques, e.g. inverse reinforcement learning (IRL), assume users provide at least stochastically optimal demonstrations. This assumption fails to hold in most real-world scenarios. Recent attempts to learn from sub-optimal demonstration leverage pairwise rankings and following the Luce-Shepard rule. However, we show these approaches make incorrect assumptions and thus suffer from brittle, degraded performance. We overcome these limitations in developing a novel approach that bootstraps off suboptimal demonstrations to synthesize optimality-parameterized data to train an idealized reward function. We empirically validate we learn an idealized reward function with $\sim0.95$ correlation with ground-truth reward versus  $\sim 0.75$ for prior work. We can then train policies achieving $\sim 200\%$ improvement over the suboptimal demonstration and $\sim 90\%$ improvement over prior work. We present a physical demonstration of teaching a robot a topspin strike in table tennis that achieves $32\%$ faster returns and $40\%$ more topspin than user demonstration.
\end{abstract}

% Two or three meaningful keywords should be added here
\keywords{Learning from Suboptimal Demonstration, Inverse Reinforcement Learning, Self-Supervised Learning}

%===============================================================================

\section{Introduction}
\label{sec:introduction}
Recent advances in robot learning offer the promise of benefiting people across a variety of applications, ranging from healthcare~\cite{gombolay2018robotic,krishnan2017transition} to manufacturing~\citep{5756872,wangheterogeneous}, scientific discovery~\citep{sparkes2010towards}, and even assistive, household tasks~\citep{cakmak2013towards}. However, the traditional model for deploying robots in real-world settings typically consists of an army of expert consultants who design, build, and program robots for each application. This \emph{ad hoc} model is cost-intensive and stands in the way of realizing a vision for ubiquitous and democratized robots -- robots that can learn from and in the hands of real end-users.
The field of Learning from Demonstration (LfD) has sought to democratize access to robotics by empowering end-users to program robots by demonstrations rather than requiring users to program robots with a computing language~\citep{ravichandar2020recent}. 

LfD research can be broken down into two key variants: imitation learning (IL) and inverse reinforcement learning (IRL). IL~\citep{ross2011reduction} directly learns a predictive model, a mapping from a state to the desired action. Despite IL's simplicity, it is ill-suited to learn from suboptimal demonstrations as the goal is to imitate the behavior rather than discover the latent objective of the demonstrator. Inverse Reinforcement Learning (IRL)~\citep{abbeel2004apprenticeship,Chen2020JointGA}, on the other hand, aims to infer a task's underlying objective by learning a policy from one or more demonstrations of that task. With this reward function, one can leverage Reinforcement Learning (RL)~\citep{10.5555/3312046} to find a policy that can autonomously accomplish the desired goal. As such, we investigate a novel IRL-based method for learning from suboptimal demonstration that accurately infers the latent, idealized reward.

However, prior work in IRL relies on fundamental assumptions that fail to hold in all but the most isolated, controlled scenarios. For example, Maximum Margin IRL~\citep{abbeel2004apprenticeship} assumes the optimality of demonstrations and does not generally produce better performance when demonstrations are suboptimal. Probabilistic IRL approaches such as maximum-entropy IRL (\citep{ziebart2008maximum}) and Bayesian IRL (BIRL~\citep{ramachandran2007bayesian}) relax this assumption of absolute optimality to stochastic optimality or ``soft'' optimality. While it is possible for probabilistic IRL to recover the optimal reward and policy when the demonstrations are slightly suboptimal according to the stochastic optimality assumption, in general, it cannot produce a much better policy than the suboptimal demonstrations.

Preference-based Reinforcement Learning~\cite{wirth2017survey} (PbRL) allows for learning algorithms to take advantage of non-numerical feedback within their policy learning, avoiding the need to have a \textit{perfect} scoring metric for demonstrations. While PbRL has been widely successful for both RL~\citep{pbpl} and IRL~\citep{dmirl, pmlr-v24-valko12a,Sugiyama2012PreferencelearningBI}, D-REX~\citep{brown2020better} has proven to be the current state-of-the-art for learning from suboptimal demonstration. D-REX utilizes behavioral cloning (BC) and a ranking formulation based on the Luce-Shepard Choice Rule~\cite{luce2012individual} to infer an idealized reward function. However, we show in Section \ref{subsec:D-REX_assumption} that Luce-Shepard Rule results in a counterproductive inductive bias. Furthermore, we find in Section \ref{sec:results} that BC-based rollout generation for synthesizing ranked, suboptimal demonstrations results in a reward function brittle against covariate shift~\citep{ross2011reduction}. In this paper, we overcome the limitations of prior work by developing a novel algorithmic framework, Self-Supervised Reward Regression (SSRR), for learning from suboptimal demonstration. 
\paragraph{Contributions:} 
\begin{enumerate}\vspace{-8pt}
    \item We empirically characterize the relationship between a policy's performance and the amount of noise injected to synthesize optimality-ranked trajectories across ten test domains and three RL baselines. With this characterization, we show that prior work fails to capture this relationship, resulting in poor learning from suboptimal demonstration.\vspace{-4pt}
    \item We propose a novel IRL framework, SSRR, that accurately learns the latent reward function described by suboptimal demonstration by leveraging a low-pass filter based on our characterization from the first contribution. With this filter, we show that SSRR can learn more accurate reward functions from suboptimal demonstration, achieving $0.94$, $0.94$, and $0.97$ correlation with the ground-truth reward across three simulated robot control tasks (HalfCheetah, Hopper, and Ant) versus only $0.854$, $0.797$, and $0.621$\ for prior work~\citep{brown2020better}.\vspace{-4pt}
    \item We develop a powerful mechanism to synthesize trajectories, Noisy-AIRL, that enhances the robustness of our reward function. Combining SSRR and Noisy-AIRL, we show that we can train a policy with our reward function to achieve an average of $163\%$, $192\%$, and $141\%$ improvement over the suboptimal demonstration across three simulated robot control tasks versus only $49\%$, $89\%$, and $39\%$ for prior work.\vspace{-4pt}
    \item We provide a real-world demonstration of SSRR in learning from suboptimal demonstration to perform a topspin strike in table tennis. We show our policy trained on SSRR reward achieves $32\%$ faster return speeds and $40\%$ more topspin than the user demonstrations.
\end{enumerate}

\section{Preliminaries}
In this section, we introduce Markov Decision Processes, IRL, and D-REX~\cite{brown2019extrapolating}. 
\paragraph{Markov Decision Process --} A Markov Decision Process (MDP) $M$ is defined as a 6-tuple $\langle S,A,R,T,\gamma,\rho_0\rangle$. $S$ and $A$ are the state and action spaces, respectively. $R(s,a)$ is the reward received by an agent for executing action, $a$, in state, $s$. $T(s^\prime|s,a)$ is the probability of transitioning from state, $s$, to state, $s'$, when applying action, $a$. $\gamma\in [0,1]$ is the discount factor, prioritizing long- versus short-term reward. $\rho_0(s)$ is the initial state distribution. A policy, $\pi(a|s)$, gives the probability of an agent taking action, $a$, in state, $s$. The goal of RL is to find the optimal policy, 
$\pi^*=\arg\max_\pi \mathbb{E}_{\tau\sim\pi}\left[\sum_{t=0}^T{\gamma^tR(s_t,a_t)}\right]$ to maximize cumulative discounted reward, where $\tau=\langle s_0,a_0,\cdots,s_T,a_T\rangle$ is an agent's trajectory. $R(\tau)=\sum_{t} R(s_t,a_t)$ is the reward operator over a trajectory.
% The MDP is given by $s_0\sim\rho_0(s), a_t\sim\pi(\cdot|s_t), s_{t+1}\sim T(\cdot|s_t,a_t)\ \forall t\in\mathbb{N}$. 
% \begin{align}
% \label{eq:reward}
%     \pi^*=\arg\max_\pi J(\pi)=\arg\max_
% \pi\mathbb{E}_{\tau\sim\pi}\left[\sum_{i=0}^T{\gamma^tR(s_t,a_t)}\right]
% \end{align}
We consider the maximum entropy version~\citep{ziebart2010modeling}, $\pi^*=\arg\max_\pi\mathbb{E}_{\tau\sim \pi}[\sum_{t=0}^T{\gamma^tR(s_t,a_t)}+\alpha H(\pi(\cdot | s_t))]$, which adds an entropy bonus (i.e., $H(\cdot)$) to favor stochastic policies and encourage exploration during training.

\paragraph{Inverse Reinforcement Learning --}
The goal of IRL is to take as input an MDP sans reward function and set of demonstration trajectories $\mathcal{D}=\{\tau_1, \tau_2, \cdots, \tau_N\}$ and return a recovered reward function $R$. Our method is based on AIRL~\cite{fu2017learning}, which casts the IRL problem in a generative-adversarial framework. AIRL consists of a discriminator and a generator in the form of a learned policy. The discriminator, $D$, is given by $D_\theta(s,a)=\frac{e^{f_\theta(s,a)}}{e^{f_\theta(s,a)}+\pi(a|s)}$, where $f_{\theta}(s,a)$ is the reward function, parameterized by $\theta$, and $\pi(a|s)$ is the policy. The reward function is updated using gradient descent via the loss function in Equation \ref{eqn:airl_loss}. Minimizing this cross entropy loss allows the discriminator to distinguish expert trajectories from generator policy rollouts. The policy, $\pi$, is trained to imitate the expert by maximizing the pseudo-reward function given by $\hat{R}= f_{\theta}(s,a)$.
\par\nobreak{ \small \noindent
\begin{gather}
    L_D=-\mathbb{E}_{\tau\sim\mathcal{D}, (s,a)\sim\tau}[\log D(s,a)]-\mathbb{E}_{\tau\sim\pi, (s,a)\sim\tau}[\log (1-D(s,a))] \label{eqn:airl_loss}
\end{gather}}
\vspace{-3mm}
\subsection{Disturbance-based Reward Extrapolation}
Disturbance-based Reward Extrapolation (D-REX)~\citep{brown2020better} performs IRL based on a ranking information to learn from suboptimal demonstration. D-REX first learns a policy $\pi_{BC}$ via behavioral cloning (BC) from an initial dataset, $\mathcal{D}$, of suboptimal demonstrations. Next, D-REX adds noise to $\pi_{BC}$ to create noisy trajectories, as given by Equation \ref{eq:noise_injection}, where $U$ is the uniform distribution and $\eta$ is the proportion of noise injection.
\par\nobreak{ \small \noindent
\begin{align}
\label{eq:noise_injection}
    \tau_\eta\sim \pi_\eta(a|s) = \eta U(a) + (1 - \eta)\pi_{BC}(a|s). 
\end{align}}D-REX assumes that the higher the noise level is, the lower the trajectory should be ranked (denoted as $\tau_i\succ\tau_j$), as given by $\tau_i\prec\tau_j, R(\tau_{\eta_i}) < R(\tau_{\eta_j})\ \forall\ \eta_i > \eta_j$. Finally, D-REX learns via Equation \ref{eqn:D-REX} the reward function through supervised learning on the trajectories' rankings, using a pairwise ranking loss according to Luce-Shepard rule~\citep{luce2012individual}.
\par\nobreak{ \small \noindent
\begin{align}
\label{eqn:D-REX}
    L(\theta)=-\frac{1}{|\mathcal{P}|}\sum_{(i,j)\in \mathcal{P}}\log\frac{\exp\left(\sum_{s\in\tau_i}{R_\theta(s)}\right)}{\exp\left(\sum_{s\in\tau_i}{R_\theta(s)}\right)+\exp\left(\sum_{s\in\tau_j}{R_\theta(s)}\right)},
\end{align}}Here, $\mathcal{P}=\{(i,j):\tau_i\succ\tau_j\}=\{(i,j):\eta_i<\eta_j\}$. In Section \ref{sec:approach}, we show this assumption produces an unhelpful inductive bias. Our method rather infers the relationship between noisy trajectories and performance, empirically learning a metric to properly relate demonstrations by their degree of suboptimality. In doing so, our approach sets a new state-of-the-art in learning from suboptimal demonstration, outperforming D-REX by $87.5\%$ on average across 3 different robotics tasks.

%===============================================================================

\section{Characterizing the Noise-Performance Curve for Reward Regression}
\label{sec:approach}
In this section, we inspect the inductive bias of D-REX (i.e., Luce-Shepard rule) and show empirically that it is a counter-productive assumption. Then, we propose an alternative formulation that accurately characterizes the relationship between a policy's performance and the injected noise.
% introduce our novel approach to learning an idealized reward function from suboptimal demonstration by relaxing this assumption. 
% followed by how SSRR alleviates an incorrect assumption in D-REX, as well as an improved SSRR self-supervised data generation process. 

\subsection{D-REX's Homogeneous Assumption on Noise-Performance Relationship}
\label{subsec:D-REX_assumption}
D-REX's loss function is based on the Luce-Shepard rule, which models decisions among \textit{discrete} and \textit{different} options. \citet{bobu2020less} points out that Luce's choice axiom is not suitable for ranking robot trajectories, which are \textit{continuous} and often are \textit{similar}. 
Moreover, D-REX's application of the Luce's choice rule does not accurately reflect the noise-performance degradation relationship. D-REX's loss function (Equation \ref{eqn:D-REX}) only depends on the number of noise levels $\eta$, and treats interval data as merely ordinal. We derive a theorem to illustrate this flaw in the case of learning from three demonstrations of differing optimality without loss of generality. 

\begin{theorem}
\label{theorem:3_noise}
Consider three trajectories associated with three noise levels, $(\eta_1,\tau_1)$, $(\eta_2,\tau_2)$, and $(\eta_3,\tau_3)$, with $\eta_1<\eta_2<\eta_3$. Denote $r_i$ as the cumulative reward for $\tau_i$, i.e., $R_\theta(\tau_i)$. Optimizing D-REX's loss function (Equation \ref{eqn:D-REX}) results in $r_2=\frac{r_1+r_3}{2}$. Proof is attached in the supplementary. 
\end{theorem}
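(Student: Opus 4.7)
The plan is to minimize the three-pair loss in the direction of $r_2$ with $r_1, r_3$ held fixed, and to observe that the stationarity condition yields the midpoint relation independently of the noise magnitudes $\eta_1, \eta_2, \eta_3$.

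First I would expand Equation \ref{eqn:D-REX} over the three ordered pairs $\mathcal{P} = \{(1,2), (1,3), (2,3)\}$ induced by $\eta_1 < \eta_2 < \eta_3$, and rewrite each summand using the identity $\log \tfrac{e^{r_i}}{e^{r_i}+e^{r_j}} = \log \sigma(r_i - r_j)$, where $\sigma$ denotes the logistic sigmoid. Up to a positive scalar, this yields a sum of three sigmoid log-losses in the pairwise differences $r_1 - r_2$, $r_1 - r_3$, and $r_2 - r_3$. Crucially, the noise labels $\eta_i$ themselves never enter the expression; only the induced ordering does.

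Next I would differentiate with respect to $r_2$. The $(1,3)$ term is independent of $r_2$, and using $\tfrac{d}{dx}\log\sigma(x) = 1 - \sigma(x)$ collapses the remaining two contributions to $\sigma(r_2 - r_1) - \sigma(r_3 - r_2)$. Setting this to zero and invoking strict monotonicity of $\sigma$ gives $r_2 - r_1 = r_3 - r_2$, i.e., $r_2 = \tfrac{r_1+r_3}{2}$. Strict convexity of $-\log\sigma$ in its argument makes this the unique conditional minimizer in $r_2$, regardless of how $r_1$ and $r_3$ are chosen.

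The main obstacle, which I would flag carefully, is conceptual rather than computational: the joint loss $L(r_1,r_2,r_3)$ has no finite unconstrained minimum because $-\log\sigma(x)\to 0$ as $x\to\infty$, so along any minimizing sequence the endpoint spread $r_1 - r_3$ diverges. The theorem is therefore best read as a first-order optimality statement on the $r_2$ coordinate given $r_1, r_3$, valid along every such sequence. The takeaway for Section \ref{subsec:D-REX_assumption} is that the midpoint identity is imposed irrespective of the spacing among $\eta_1, \eta_2, \eta_3$, confirming that D-REX bakes in a noise-agnostic arithmetic-mean prior on intermediate returns --- precisely the counter-productive inductive bias we want to remove.
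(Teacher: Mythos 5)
Your proof is correct and follows the same overall strategy as the paper's: hold $r_1$ and $r_3$ fixed, discard the $(1,3)$ pair as independent of $r_2$, and minimize the remaining two terms over $r_2$ alone. The only substantive difference is the final step. The paper combines the two surviving log-terms into a single fraction, divides numerator and denominator by $e^{r_2}$, and applies AM--GM, $e^{r_2} + e^{r_1 + r_3 - r_2} \ge 2\sqrt{e^{r_1}e^{r_3}}$, with equality exactly when $r_2 = \frac{r_1+r_3}{2}$; you instead set the derivative $\sigma(r_2 - r_1) - \sigma(r_3 - r_2)$ of the restricted loss to zero (with $\sigma$ the logistic sigmoid) and invoke strict monotonicity, with strict concavity of $\log\sigma$ supplying uniqueness of the conditional minimizer. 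Both arguments are sound and yield the identical conclusion; the AM--GM route reaches the global conditional minimum in one elementary algebraic stroke, while the stationarity route is more mechanical and extends more readily to other pair configurations. Your explicit caveat that the joint loss $L(r_1,r_2,r_3)$ has no finite unconstrained minimizer --- since $-\log\sigma(x)\to 0$ as $x\to\infty$, the endpoint spread diverges along any minimizing sequence --- is a genuine point the paper's proof leaves implicit when it says ``we could view the problem as fixing $r_1$ and $r_3$''; stating the theorem as a first-order condition on the $r_2$ coordinate given the endpoints, as you do, is the honest reading and does not weaken the intended conclusion that the midpoint constraint is imposed independently of the actual values of $\eta_1,\eta_2,\eta_3$.
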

The relationship $r_2=\frac{r_1+r_3}{2}$ is generally not true, as we show empirically in Section \ref{subsec:performance_noise}. Further, $r_2$ is generally a function of $\eta_2$, which is ignored by D-REX.
% Similar generally untrue relationships will occur as the number of demonstrations increase, leading to a negative inductive bias. D-REX's loss function assumes that,
% an invariant noise-performance structure no matter what environment, the policy or the noise options are. 
In an extreme example, a set of demonstrations with injected noise levels of $0\%, 50\%, 100\%$ would have the same reward relationship as demonstrations with injected noise levels $0\%, 1\%, 100\%$, resulting in the same reward, $r_2$, for $50\%$ noise and $1\%$ noise. This contradiction motivates us to utilize environment interaction to learn a more accurate noise-performance relationship and, in turn, a better reward function.

\subsection{The Empirical Noise-Performance Relationship}
\label{subsec:performance_noise}
We empirically investigate the noise-performance relationship curve for a diverse set of five Atari games and five MuJoCo control tasks~\citep{todorov2012mujoco}. For each domain, we learn an optimal policy, $\pi^*$, and inject noise at various levels (Equation \ref{eq:noise_injection}). We show the noise-performance relationship of a trained PPO~\citep{schulman2017proximal} agent in Figures \ref{fig:atari_performance_noise} and \ref{fig:mujoco_performance_noise}. Results of DQN~\citep{mnih2015human} an SAC~\citep{haarnoja2018soft} agents are in the supplementary material. We find that the relationship between the performance of the policy and the amount of injected noise depends on the interval -- not just ordinal -- value of the noise and the specific environment.
On the other hand, D-REX's approach incorrectly assumes a homogeneous and ordinal noise-performance relationship across environments and policies.

\begin{figure}[tb]
    \centering
    \begin{subfigure}[b]{0.48\textwidth}
      \centering
      \includegraphics[width =0.9\linewidth]{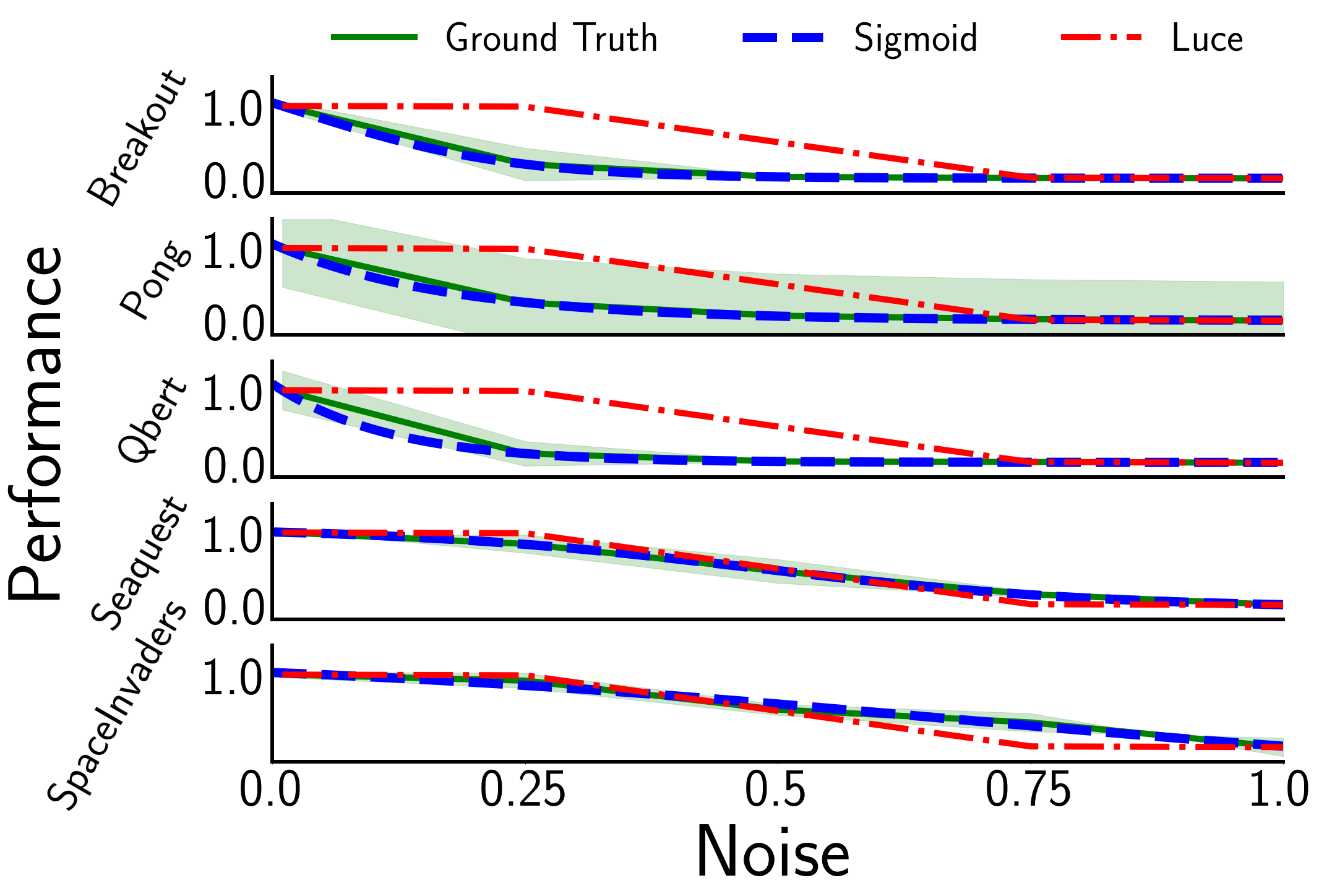}
      \caption{Atari Noise-Performance Relationship}
      \label{fig:atari_performance_noise}
    \end{subfigure}
    \begin{subfigure}[b]{0.48\textwidth}
      \centering
      \includegraphics[width =0.9\linewidth]{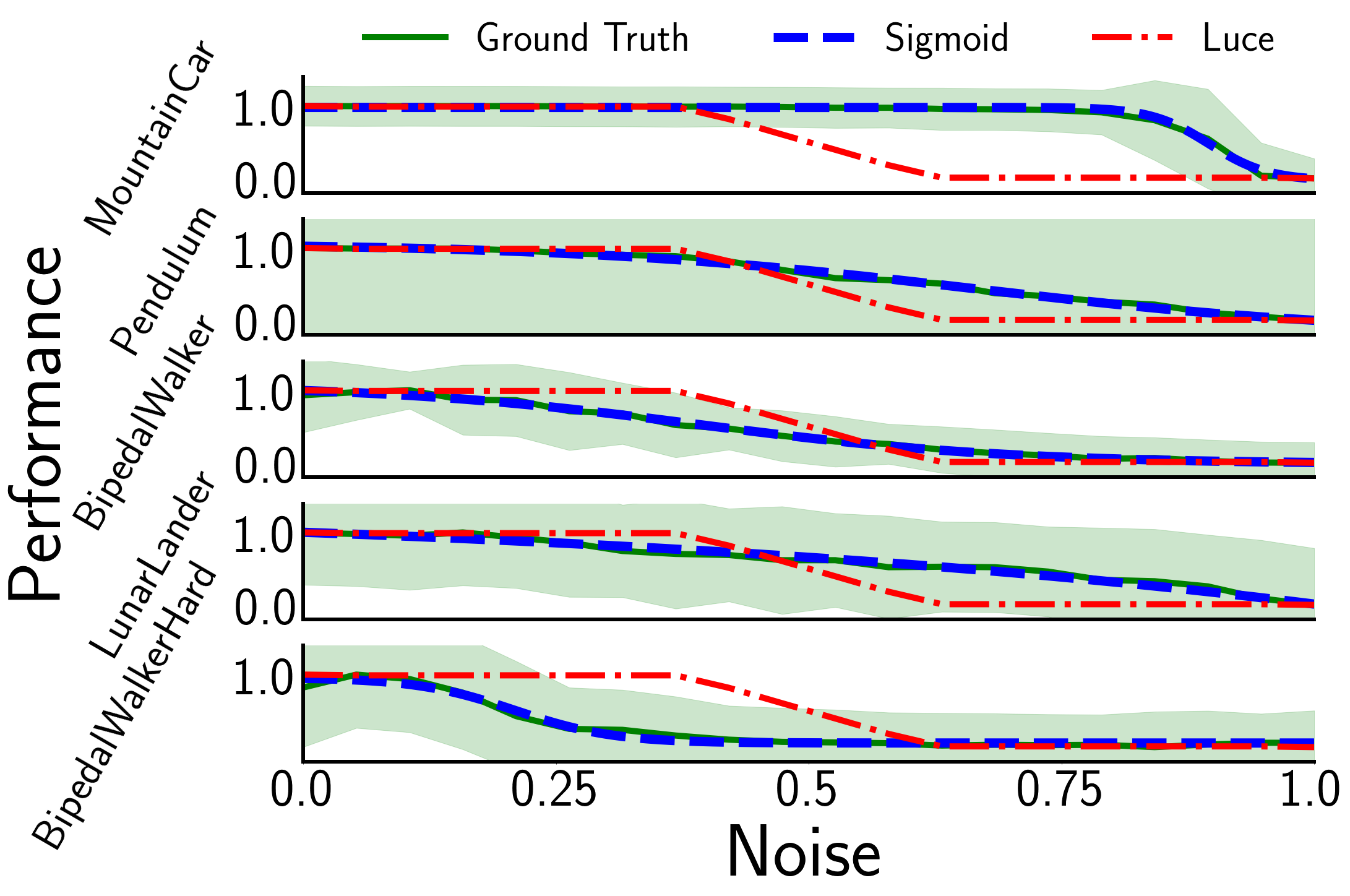}
      \caption{MuJoCo Noise-Performance Relationship}
      \label{fig:mujoco_performance_noise}
    \end{subfigure}
\caption{This figure depicts the noise-performance relationship in Atari and MuJoCo environments. The shaded regions for Ground Truth represent standard deviations over 20 trials. We utilize 5 and 20 noise levels equal-spaced between $0$ and $1$ for Atari and MuJoCo environments, respectively.}
\label{fig:performance_noise}
\end{figure}

We observe from Figure \ref{fig:atari_performance_noise} and \ref{fig:mujoco_performance_noise} that the noise-performance relationship is well-characterized by a four-parameter sigmoid function (Equation \ref{eq:fit}), where $c$ is a scaling factor, $y_0$ is the vertical offset, $x_0$ is the horizontal offset, and k modulates the sigmoid's slope.
\par\nobreak{ \small \noindent
\begin{align}
\label{eq:fit}
    \sigma(\eta)=\frac{c}{1+\exp(-k(\eta-x_0))}+y_0. 
\end{align}}To measure the accuracy of the sigmoid fit, we compute the $R^2$ value between the ground-truth reward and the learned reward function. With our sigmoid-based method, the average $R^2$ across all environments and policies is $M=0.996$ with  $SD=0.004$. In comparison, if we utilize the assumption of D-REX, the average $R^2$ is $M=0.812$ with $SD =0.153$.

Based on these observations, we propose to learn the noise-performance relationship from the data. However, learning this relationship is not trivial as we do not have access to the ground-truth reward function. Moreover, the relationship is dependent on the environment configurations (e.g., transition function) and the policy. In Section 4, we develop a novel approach for learning from suboptimal demonstrations that creates and leverages synthetic data to predict the noise-performance relationship, assuming a sigmoidal characteristic supported by results in Figure \ref{fig:performance_noise}. With this predicted relationship, we infer the idealized reward function and learn a better-than-demonstration policy. 

\section{Our Approach}

In this section, we present our novel IRL framework, Self-Supervised Reward Regression (SSRR), which accurately learns idealized reward functions and high-performing policies from suboptimal demonstration. SSRR sets a new state-of-the-art in learning from suboptimal demonstration by 1) learning the noise-performance relationship for policy degradation (Section \ref{subsec:SSRR}) and 2) training under a new methodology, Noisy-AIRL, to learn a noise-robust reward function (Section \ref{subsec:noisy_AIRL}).

\subsection{Self-Supervised Reward Regression}
\label{subsec:SSRR}

To predict the noise-performance relationship, we first need a reward function to criticize a policy and second, a policy that we can bootstrap to synthesize a data set of noisy trajectories.
To accomplish both steps, we utilize Adversarial IRL (AIRL), producing a reward function and policy from suboptimal demonstration.\footnote{Note: AIRL is not the only possible IRL method we could leverage; we choose AIRL as it is the state-of-the-art and empirically lends itself well to this task.} We utilize AIRL's learned reward function to criticize the collected noisy trajectories, providing the algorithm information about how the reward a policy receives degenerates as noise is injected into the policy's rollouts. 

However, being able to estimate the reward of a set of noisy policy rollouts is not enough. Rather, we must then learn a reward function that is aware of the amount of noise injected to the policy and accurately predicts the degree to which the policy's performance is degraded when this amount of noise is injected. Such a reward function would be environment-adaptative, policy-adaptative, and noise-level-adaptative instead of static as assumed as in prior work~\cite{brown2020better}.

To learn this reward function, we propose Self-Supervised Reward Regression (SSRR), a novel IRL method consisting of three phases: Phase 1) self-supervised data generation, Phase 2) Noise-Performance characterization, and Phase 3) Reward \& Policy learning. Figure \ref{fig:ssrr_diagram} depicts SSRR.

\begin{figure*}[t]
  \centering
  \includegraphics[width = \linewidth]{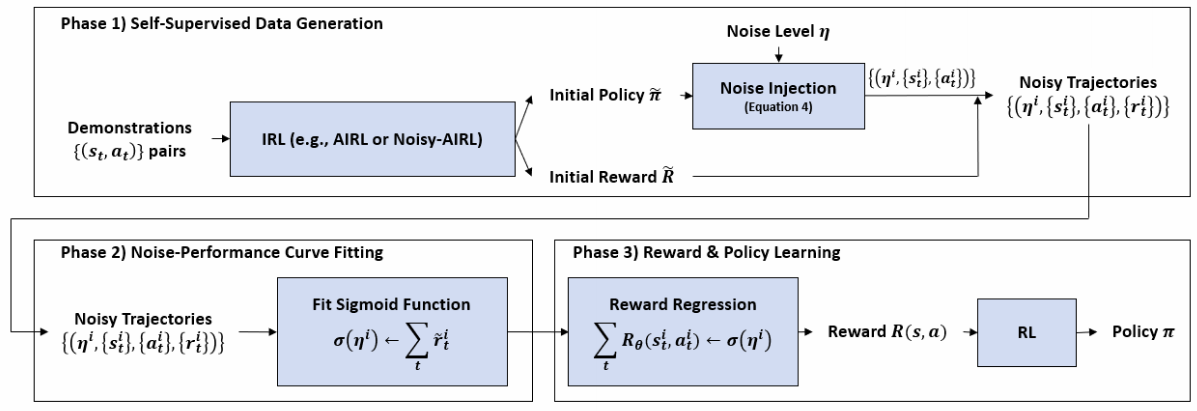}
  \vspace{-12pt}
  \caption{This figure depicts Self-Supervised Reward Regression (SSRR).}
  \label{fig:ssrr_diagram}
\end{figure*}

\textbf{Phase 1)}
For self-supervised data generation, we utilize the initial reward, $\Tilde{R}$, and initial policy, $\Tilde{\pi}$, from AIRL on suboptimal demonstration. Similar to D-REX, we inject uniform distribution noises into the learned policy (Equation \ref{eq:noise_injection}) to produce ``noisy'' policies. The trajectories, $\{\tau^i\}\ \forall i$, generated from these policies consist of four elements $\tau^i=\langle\eta^i, \{s_t^i\}, \{a_t^i\}, \{\Tilde{r}_t^i\}\rangle$: the noise parameter, $\eta^i$, a set of states and actions, $\{s_t^i\}$ and $\{a_t^i\}$, and the corresponding initial reward $\tilde{r}_t^i=\Tilde{R}(s_t^i,a_t^i)$. 

\textbf{Phase 2)}
For noise-performance curve fitting, we empirically characterize the noise-performance curve. Since we have access to an initial reward estimate for each trajectory, $\Tilde{R}(\tau^i)=\sum_t{\Tilde{R}(s_t^i,a_t^i)}$, we regress a sigmoid function towards the cumulative estimated rewards to determine the four sigmoid parameters in Equation \ref{eq:fit}. The sigmoid acts as a low-pass filter on the noise-performance relationship from AIRL's initial reward, creating a smooth noise-performance relationship even in the presence of high-frequency neural network output. 

\textbf{Phase 3)}
Leveraging the resultant noise-performance curve, we regress a reward function parameterized by trajectory states and actions, as shown in Equation \ref{eq:SSRR_loss}. Here, $\theta$ represents parameters of our idealized reward function. After obtaining an accurate reward function, $R_{\theta}$, we can apply RL to obtain a policy $\pi^*$ that outperforms the suboptimal demonstration, as demonstrated in Section \ref{sec:results}.
\par\nobreak{ \small \parskip0pt \noindent
\begin{align}
\label{eq:SSRR_loss}
    L_{\text{SSRR}}(\theta)=\mathbb{E}_{\tau^i}\left[\left(\left(\sum_{t=0}^T{R_\theta(s_t^i,a_t^i)}\right) - \sigma(\eta^i)\right)^2\right]
\end{align}}

\vspace{-8pt}
\subsection{An Improved Self-Supervised Data Generator: Noisy-AIRL}
\label{subsec:noisy_AIRL}
The learned reward function from AIRL (i.e., the discriminator) is biased towards evaluating the current policy, $\pi$, as shown in the expectations in Equation \ref{eqn:airl_loss}. Therefore, the estimated initial reward, $\Tilde{R}$, may not be reliable for trajectories drawn from a policy, $\pi'$, different than the current AIRL policy, $\pi$. This potential weakness is exposed by the need to learn a reward function that accurately discriminates between trajectories with varying noise levels and, in turn, varying optimality. 

Thus, we need a reward function that generalizes across the region of the state space occupied by noisy-policy trajectories (i.e., more robust to covariate shift in the region of noisy policies). Therefore, we propose a new method, Noisy-AIRL, that augments AIRL by injecting noise into the AIRL generator, $\pi$, to expose the reward function to a broader state space. Accordingly, we change the expectation over AIRL's discriminator objective to represent the noisy policy distribution and add an importance sampling adjustment into discriminator's loss, shown in Equation \ref{eq:state_dist}. 
\par\nobreak{ \small \noindent
\begin{align}
\label{eq:state_dist}
\begin{split}
    L_D=-\mathbb{E}_{\tau\sim\mathcal{D}, (s,a)\sim\tau}[\log D(s,a)]-\mathbb{E}_{\tau\sim\pi_\eta, (s,a)\sim\tau}\left[\frac{\pi(a|s)}{\pi_\eta(a|s)}\log (1-D(s,a))\right]
\end{split}
\end{align}}Prior work~\citep{pmlr-v78-laskey17a} has used noise injection to add robustness for IL. However, our work develops an IRL-based method that injects noise to learn both a robust reward function and a robust policy. We show in Section \ref{sec:results} that Noisy-AIRL learns more accurate initial reward estimates, $\Tilde{R}$, leading to a more accurate characterization of the noise-performance curve and higher-performing policies.

%===============================================================================

\begin{figure}
    \centering
    \begin{subfigure}[b]{0.48\textwidth}
      \centering
      \includegraphics[width =0.7\linewidth]{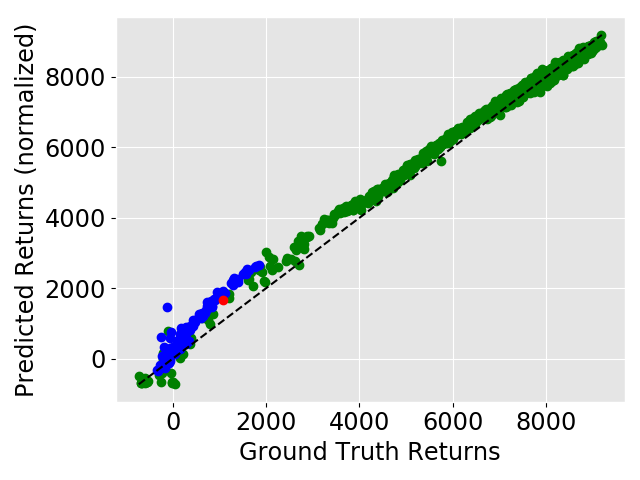}
      \caption{SSRR (Ours)}
      \label{fig:ssrr_halfcheetah_original}
    \end{subfigure}
    \begin{subfigure}[b]{0.48\textwidth}
      \centering
      \includegraphics[width =0.7\linewidth]{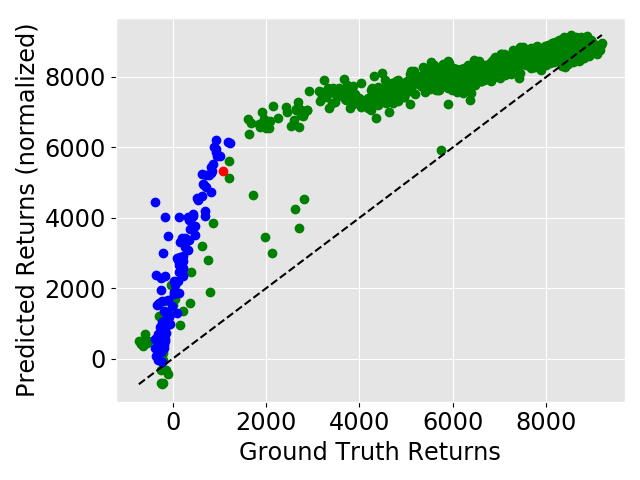}
      \caption{D-REX~\cite{brown2020better}}
      \label{fig:d_rex_halfcheetah_original}
    \end{subfigure}
\caption{Reward function correlation with Ground-Truth reward in HalfCheetah. Red dots represent demonstrations given to BC/AIRL, blue dots represent noise-injected trajectories (synthetic learning data), and green dots are unseen ``test'' trajectories used. The dotted line reflects perfect correlation. Both SSRR and D-REX returns are normalized to the range of ground-truth returns. }
\label{fig:reward_correlation_halfcheetah}
\end{figure}

\section{Results}
\label{sec:results}
In this section, we demonstrate the advantage of using SSRR across three MuJoCo virtual robotic control tasks~\citep{todorov2012mujoco}: HalfCheetah-v3, Hopper-v3, and Ant-v3. These robotic tasks are common benchmarks in RL and LfD literature, including D-REX~\citep{brown2020better}. 

We utilize two evaluation metrics in our comparison between SSRR and its closest benchmark, D-REX: 1) correlation of the learned reward function with ground-truth reward on both the training and testing data; 2) Ground-truth rewards of policies trained on a reward function learned from suboptimal demonstration. Together, these metrics evaluate the ability to recover ground-truth rewards (accuracy) and synthesize strong control policies (performance).

To construct a fair comparison between our approach and D-REX, we systematically vary whether the synthetic data set provided to each method is generated through BC (D-REX's method), AIRL, and Noisy-AIRL (our approach). We generate five data sets for each method to show repeatability. Additional details are provided in the supplementary material. 

\subsection{Reward Results}
\label{subsec:results_reward}
Table \ref{tab:SSRR_Corr} reports the correlation of the learned reward function with the ground-truth reward for SSRR and D-REX across three domains and three different synthetic data generation methods. Figure \ref{fig:reward_correlation_halfcheetah} depicts the correlation of each method in the HalfCheetah environment. We observe that SSRR recovers more accurate reward functions than D-REX across all domains and data-generation methods. A combination of our SSRR and Noisy-AIRL approaches achieves a correlation of $0.941$, $0.940$, and $0.970$ with the ground-truth reward versus only $0.845$, $-0.023$, and $0.621$\ for D-REX across the three simulated robot control tasks. 
These results show that SSRR recovers a more accurate representation of the latent reward function underlying the suboptimal demonstration. Moreover, these results demonstrate that Noisy-AIRL generates a better, synthetic data set of noisy trajectories for helping SSRR learn an accurate reward function. Comparing the ability to rank trajectories, SSRR also achieves higher ranking accuracy: on HalfCheetah, SSRR achieves an accuracy of $87.5\%$ vs. $79.4\%$ for D-REX; On Hopper, SSRR achieves $85.4\%$ vs. $80.2\%$ for D-REX; On Ant, SSRR achieves $88.0\%$ vs. only $68.13\%$ for D-REX. In contrast to the consistently high performance of our approach, D-REX's performance varies widely, achieving an average correlation of only $0.621$ on Ant but up to $0.845$ on HalfCheetah-v3. SSRR maintains superior performance by accurately capturing the domain-specific relationship between the performance of a learned policy and the amount of noise injection for synthetic data generation. We hypothesize D-REX works best with BC on Ant-v3 as BC-based noise-injection might better fit the Luce-Shepard rule. 

\begin{table}[t]
\caption{Learned Reward Correlation Coefficients with Ground-Truth Reward Comparison between SSRR and D-REX with different noisy trajectory generation methods in three domains. Reported results are Mean (Standard Deviation) from five trials. Note that SSRR + Noisy-AIRL is our method, and D-REX + BC is the approach of prior work~\cite{brown2020better}.}
\begin{center}
\begin{tabular}{P{50pt}P{60pt}P{40pt}P{40pt}P{40pt}P{40pt}P{40pt}}
\hline
\multirow{2}{*}{Domain} & \multicolumn{2}{c}{\textbf{Noisy-AIRL}} & \multicolumn{2}{c}{AIRL} & \multicolumn{2}{c}{BC} \\
\cline{2-7}& \textbf{SSRR} &  D-REX & SSRR &  D-REX & SSRR & D-REX \\
\hline
\multirow{2}{*}{HalfCheetah-v3} & \textbf{0.941} & 0.860 & 0.917 & 0.895 &  0.883 &  0.845 \\
&  \textbf{(0.025)} & (0.018) & (0.017) & (0.029) & (0.052) & (0.028)\\
\hline
\multirow{2}{*}{Hopper-v3} &  \textbf{0.940} & 0.880 & 0.813 & 0.797  & -0.050 & -0.023 \\
&  \textbf{(0.008)} & (0.017)& (0.031) & (0.015)  & (0.031) & (0.129) \\
\hline
\multirow{2}{*}{Ant-v3} & \textbf{0.970} & 0.199 & 0.615 & 0.335 & 0.649 & 0.621 \\
&  \textbf{(0.006)} & (0.010) & (0.024) & (0.045) & (0.071) & (0.024) \\
\hline
\label{tab:SSRR_Corr}
\end{tabular}
\end{center}
\vspace{-30pt}
\end{table}

\begin{table}[h]
\caption{Ground-Truth Reward of Final Policy Trained by Reward Function from SSRR and D-REX.}
\begin{center}
\begin{tabular}{cccccc}
% P{70pt}P{80pt}P{15pt}P{60pt}P{15pt}P{60pt}P{15pt}
\hline
\multirow{2}{*}{Domain} & \multicolumn{3}{c}{Demonstration} & SSRR & D-REX \\
\cline{2-6} & \#Demo & Average & Best & Average (Percentage) & Average (Percentage) \\
\hline
HalfCheetah-v3 & 1 & 1085 & 1085 & \textbf{2853 (263\%)} & 1619 (149\%) \\
Hopper-v3 & 4 & 1130 & 1253 & \textbf{3298 (292\%)} & 1574 (139\%) \\
Ant-v3 & 10 & 1157 & 1202 & \textbf{3944 (341\%)} & 2185 (189\%) \\
\hline
\label{tab:policy_trained}
\end{tabular}
\end{center}
\vspace{-30pt}
\end{table}

\subsection{Policy Results}
\label{subsec:results_policy}
We utilize Soft Actor-Critic (SAC~\citep{haarnoja2018soft}) for training a policy from SSRR's reward function. We measure the ground-truth reward of the policy trained from SSRR versus D-REX across three domains, as shown in Table \ref{tab:policy_trained}. For both D-REX and SSRR, we chose the reward function with the highest correlation with ground-truth from five repeats to train the policy. A policy trained with SSRR is able to not only able to outperform the suboptimal demonstration by $199\%$ on average, but also able to outperform D-REX by $87.5\%$. These results show that our approach almost doubles the performance advantage of prior work and sets a new state-of-the-art for learning from suboptimal demonstration. The key to our approach's success lies in better capturing the noise-performance relationship in SSRR and training a more robust reward function with data from Noisy-AIRL.

%===============================================================================

\section{Real-world Demonstration: Table-Tennis}
\label{sec:results_sawyer}
In our real-world experiment, we use a Rethink Robotics Sawyer to learn a topspin table tennis strike. There are three key aspects of successfully striking a topspin shot: 1) returning the ball over the net, 2) generating longitudinal speed, and 3) generating spin, resulting in a sharp drop of the ball after crossing the net. Through kinesthetic teaching, we provided five suboptimal demonstrations in the form of topspin strikes that more narrowly made it over the net, traveled more slowly, and had less topspin. Additional details about our setup and training process are included in the supplementary. A frame-by-frame display of our learned policy is in Figure \ref{fig:sawyer_ping_pong}. A video of the learned policy is included in the supplementary material. We quantitatively evaluate our learned policy by evaluating the longitudinal return speed of the ping pong ball, the acceleration caused due to spin, and the SSRR reward function. The acceleration in the vertical direction is a result of the net effects of gravity, drag, and the Magnus effect (i.e., topspin). Accordingly, our metric for evaluating the amount of topspin is the net, downward acceleration. A greater topspin results in a greater net downward acceleration.\\
We compare between the 5 suboptimal demonstrations, 5 AIRL policy trajectories, and 10 successful trajectories of SSRR policy in Table \ref{tab:ping_pong}. A one-way ANOVA and post-hoc test with DUNN-ŠIDÁK correction show that our SSRR policy has higher return speed ($p<.05$), more spin ($p<.05$), and higher reward ($p<.01$). More details are located in the supplementary. 

\begin{table}[t]
\caption{Performance of Table-Tennis SSRR Policy versus AIRL and Suboptimal Demonstrator.}
\begin{center}
\begin{tabular}{cccc}
% P{70pt}P{80pt}P{15pt}P{60pt}P{15pt}P{60pt}P{15pt}
\hline
\multirow{2}{*}{Metric} & Demonstration & AIRL & SSRR (Ours) \\
\cline{2-4}  & Average (Stdev) & Average (Stdev) & Average (Stdev) \\
\hline
Return Lateral Speed (pixels/sec) & 122.76 (36.74) & 107.13 (1.66) & \textbf{161.97 (31.19)}  \\
Vertical Acceleration (pixels/sec/sec) & 318.99 (6.56) & 331.04 (27.43) & \textbf{363.57 (29.37)} \\
SSRR Reward &  61.83 (4.24) & 73.73 (2.33) & \textbf{86.26 (13.45)}  \\
\hline
\label{tab:ping_pong}
\end{tabular}
\end{center}
\vspace{-8mm}
\end{table}

\begin{figure*}[h]
  \centering
  \includegraphics[width =\linewidth]{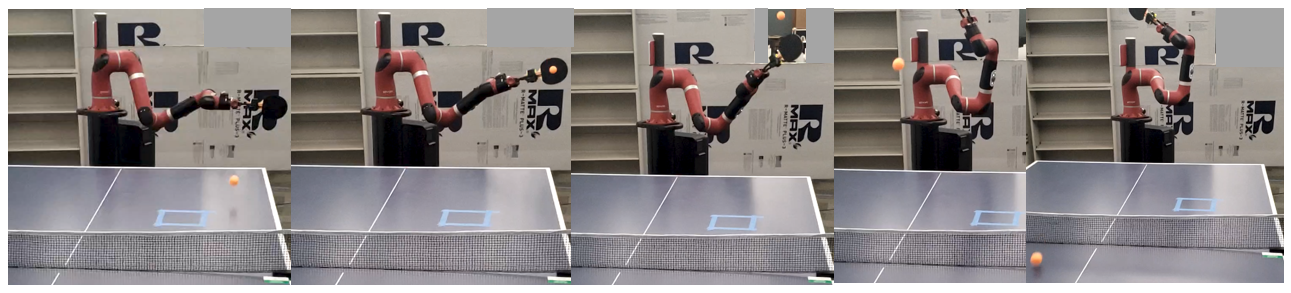}
  \vspace{-16pt}
  \caption{This figure depicts a frame-by-frame Sawyer Table Tennis Robot policy illustration.}
  \label{fig:sawyer_ping_pong}
  \vspace{-16pt}
\end{figure*}

%===============================================================================

\section{Related Work}
\label{sec:related_work}
IL and IRL techniques commonly assume optimality of demonstration~\citep{abbeel2004apprenticeship,Chen2020JointGA,Gombolay:2016a}. While some approaches slightly relax this assumption to stochastic optimality~\citep{ziebart2008maximum,ramachandran2007bayesian}, even this assumption does not hold for most real-world scenarios. \citet{kaiser1995obtaining} analyzes five sources of suboptimality within the human demonstration: unnecessary actions, incorrect actions, unmotivated actions, demonstration scenario is too limited for generalization, or specification of a wrong intention. \citet{nikolaidis2017game} and \citet{Paleja2019InterpretableAP} consider that when giving demonstrations, experts typically adopt diverse heuristics (i.e., ``mental shortcuts"~\citep{newell1972human}) to approximately solve optimization problems.

Researchers have attempted to learn from suboptimal demonstration by utilizing ranking information among trajectories as a supervision signal~\citet{dmirl}. While this procedure may provide accurate ranking information, this supervision is expensive and prone to error, especially given the continuous nature of robot trajectories. \citet{pmlr-v24-valko12a} takes a step in reducing the cost by collecting demonstrations and pre-labeling the data as expert or non-expert however, this approach loses granularity. \citet{Wu2019ImitationLF} presents an approach where unlabeled crowd-sourced data can be used to learn from suboptimal demonstration with the augmentation of confidence scores, which can be inaccurate, to assess the trajectory's optimality. \citet{brown2019extrapolating} instead queried humans to rank trajectories providing pairwise correlations, a process which is time-intensive and potentially noisy. Later work, D-REX~\citep{brown2020better}, automatically generated rankings by using noise-injected policies.

\textcolor{black}{Our demonstration of SSRR for robot table tennis was inspired by \citet{peters2013towards}, who proposed robot table tennis as a challenging task for robot learning. In a series of papers, these researchers designed sophisticated models, features~\cite{muelling2014learning}, and rewards~\cite{mulling2013learning} to train versatile policies for robot skill learning~\cite{muelling2010learning,KOC2018121}.}

\section{Conclusion}
\label{sec:conclusion}
In this paper, we develop a novel IRL algorithm, SSRR, to learn from suboptimal demonstration by characterizing the relationship between a policy's performance and the amount of injected noise. We introduce Noisy-AIRL to improve SSRR's robustness to covariate shift induced by noise injection. We empirically validate we can learn an idealized reward function from a suboptimal demonstration with $\sim0.95$ correlation with the ground-truth reward versus only $\sim 0.75$ for prior work. With this reward function, we can train policies achieving $\sim 200\%$ improvement over the suboptimal demonstration. Finally, we present an implementation of our algorithm on a real-world table tennis robot that learns to significantly exceed the ability of the human demonstrator ($p < 0.05$).
%===============================================================================

% The maximum paper length is 8 pages excluding references and acknowledgements, and 10 pages including references and acknowledgements

\clearpage
% The acknowledgments are automatically included only in the final and preprint versions of the paper.
\acknowledgments{We to thank Dr.~Sonia Chernova for her mentorship and helpful critique of this research as well as that of Dr. Nakul Gopalan. We also wish to thank our reviewers for their valuable feedback in revising our manuscript. This work was sponsored by MIT Lincoln Laboratory grant 7000437192, the Office of Naval Research under grant N00014-19-1-2076, NASA
Early Career Fellowship grant 80HQTR19NOA01-19ECF-B1, and a gift to the Georgia Tech Foundation from Konica Minolta.}

%===============================================================================

% no \bibliographystyle is required, since the corl style is automatically used.
\small
\bibliography{example}  % .bib
\clearpage
\appendix
\section{Real-world Demonstration: Table-Tennis}
In this section, we discuss our experimental setup of the real-world table tennis robot environment alongside details of our training framework and results.
\subsection{Experiment Setup}
The setup of our table tennis environment consists of a 7-degree of freedom Sawyer robot arm from Rethink Robotics, two Go-Pro Hero7 cameras, and a Newgy Robo-Pong 2055 ball feeder. The angle and speed the ball feeder was calculated empirically previous to collecting human demonstrations and kept constant throughout our experiment.

To communicate with Sawyer, we used Rethink Robotic's Intera SDK that integrates seamlessly with the Robot Operating System (ROS) middleware. We record the participant's demonstrations by subscribing to the robot joint state topic. This provides us with joint position and velocity at a rate of of 100 Hz.

For our vision system, we used the GoPro Hero7 Black to track the ping pong ball's trajectory throughout the demonstrations. We chose these cameras for their high frame rate. ElGato Camlinks were used as a connection bus to convert readily stream image data into the computer. This permits us a streaming rate of 60 Hz. To estimate the ping pong ball 3D position in real-world, we chose the triangulation method with a parallel axis camera set up of stereo vision. We utilized OpenCV library to detect the ball using carefully designed HSV values and contour size. We utilize ping pong ball transition dynamics alongside the calculated location of the ping pong ball in a Extended Kalman Filter to produce an accurate estimate of the ping pong ball's location throughout a trajectory.

% We collect 5 successful suboptimal demonstrations of a topspin motion from a demonstrator.
\subsection{Implementation Details and Hyperparameters}
Noisy-AIRL's hyperparameters for table tennis could be found in Table \ref{tab:noisy_airl_hyperparameters_table_tennis}. 
\begin{table}[h]
\caption{Hyperparameters for AIRL and Noisy-AIRL in Table Tennis}
\begin{center}
\begin{tabular}{cc}
% P{70pt}P{80pt}P{15pt}P{60pt}P{15pt}P{60pt}P{15pt}
\hline
Hyperparameter & Value \\
\hline
discriminator\_update\_per\_step & 10 \\
max\_path\_length & 200 \\
episode\_per\_train\_step & 1 \\
$\gamma$ & 0.99 \\
GAE $\lambda$ & 0.95 \\
TRPO K-L step size & 0.001 \\
TRPO conjugate gradient steps & 10 \\
Train steps & 500 \\
\hline
\label{tab:noisy_airl_hyperparameters_table_tennis}
\end{tabular}
\end{center}
\end{table}

For noise injection, since we control the robots via speed mode, we cannot inject uniform-sampled action (speed) into the robot trajectory. Instead, we inject randomly generated Gaussian noises into each joints' speed command. We injected noises with 9 levels ($\eta\in\{0,1,2,3,4,5,6,7,8\}$). For each level, we generated $\eta$ number of Gaussian noises to inject into the original trajectory, ensuring a smooth movement. We observed clear performance degradation from noise level $0$ to noise level $8$. 

The SSRR hyperparameters for table tennis are the same as in the simulation environments. 

\subsection{Results}

\begin{figure}[tb]
    \centering
    \begin{subfigure}[b]{\textwidth}
      \centering
      \includegraphics[width =0.9\linewidth]{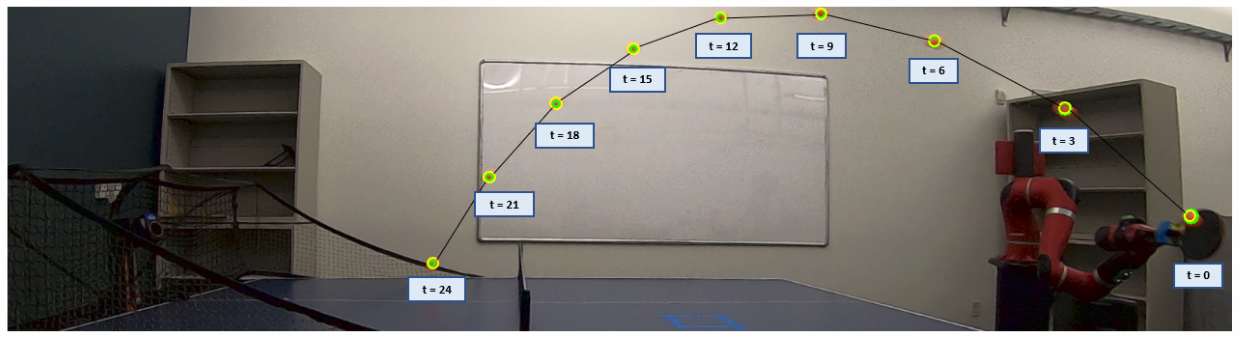}
      \caption{SSRR Policy}
      \label{fig:ssrr_spin}
    \end{subfigure}
    \begin{subfigure}[b]{\textwidth}
      \centering
      \includegraphics[width =0.9\linewidth]{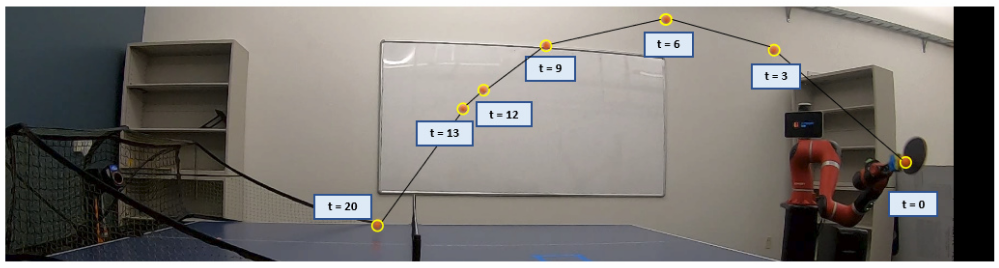}
      \caption{AIRL Policy}
      \label{fig:airl_spin}
    \end{subfigure}
    \begin{subfigure}[b]{\textwidth}
      \centering
      \includegraphics[width =0.9\linewidth]{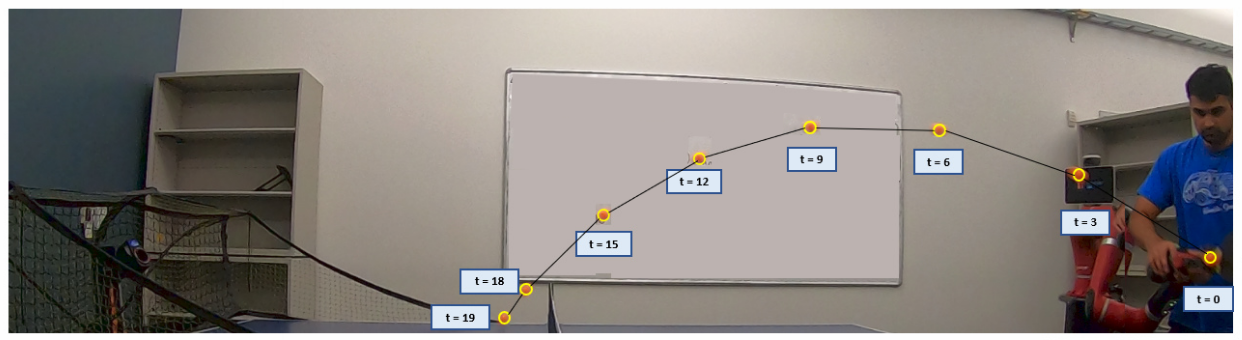}
      \caption{Demonstrator Suboptimal Policy}
      \label{fig:human_spin}
    \end{subfigure}
\caption{This figure depicts the arc of the returned ball. A sharper drop is correlated with higher spin. As seen, our policy is able to produce a superior topspin. The ping pong ball has been enhanced for viewing.}
\label{fig:sharp_drop}
\end{figure}

In our real-world experiment, we use a Rethink Robotics Sawyer to learn a topspin table tennis strike. There are three key aspects of successfully striking a topspin shot: 1) returning the ball over the net, 2) generating longitudinal speed, and 3) generating spin, resulting in a sharp drop of the ball after crossing the net. Through kinesthetic teaching, we provided five suboptimal demonstrations in the form of topspin strikes that more narrowly made it over the net, traveled more slowly, and had less topspin.  

We quantitatively evaluate our learned policy by evaluating the longitudinal return speed of the ping pong ball, the acceleration caused due to spin, and the SSRR reward function. The acceleration in the vertical direction is a result of the net effects of gravity, drag, and the Magnus effect (i.e., topspin). Our metric for evaluating the amount of topspin is the net, downward acceleration. A greater topspin results in a greater net downward acceleration.

We compare between the 5 suboptimal demonstrations, 5 AIRL policy trajectories, and 10 successful trajectories of SSRR policy in Table \ref{tab:ping_pong}. To verify that an SSRR policy has higher return speed, more spin, and higher reward, we utilize one-way ANOVA and posthoc test with DUNN-ŠIDÁK correction. We first test for normality and homoscedasticity and do not reject the null hypothesis in either case, using Shapiro-Wilk ($p > .05$) and Levene's Test ($p > .05$). Thus, we could perform one-way ANOVA and posthoc test with DUNN-ŠIDÁK correction for the three metrics, resulting in

\paragraph{Return Lateral Speed} One-way ANOVA shows significance ($F(17,2)=6.01, p<.05$). Posthoc test with DUNN-ŠIDÁK correction shows significant difference between SSRR with either demonstration ($p<.01$) or AIRL ($p<.05$), and no significant difference between AIRL and demonstration. 
\paragraph{Vertical Acceleration} One-way ANOVA shows significance ($F(17,2)=4.54, p<.05$). Posthoc test with DUNN-ŠIDÁK correction shows significant difference between SSRR with either demonstration ($p<.01$) or AIRL ($p<.05$), and no significant difference between AIRL and demonstration. 
\paragraph{SSRR Reward} One-way ANOVA shows significance ($F(17,2)=9.10, p<.01$). Posthoc test with DUNN-ŠIDÁK correction shows significant difference between SSRR with either demonstration ($p<.01$) or AIRL ($p<.05$), and significant difference between AIRL and demonstration ($p<.001$). 

As seen, our policy outperforms the average demonstration by a wide margin.
% We then further utilize our learned reward function and evaluate the policy vs. suboptimal demonstrations and see that our policy achieves much higher reward then the average demonstration. 
The success rate of our topspin strike is 75$\%$, compared with AIRL policy's success rate of $20\%$. The increase in spin can be visualized in Figure \ref{fig:sharp_drop}.

\clearpage

\section{Proof of Theorem 1}
\begin{theorem}
Consider three trajectories associated with three noise levels, $(\eta_1,\tau_1)$, $(\eta_2,\tau_2)$, and $(\eta_3,\tau_3)$, with $\eta_1<\eta_2<\eta_3$. Denote $r_i$ as the cumulative reward for $\tau_i$, i.e., $R_\theta(\tau_i)$. Optimizing D-REX's loss function (Equation \ref{eqn:D-REX}) results in $r_2=\frac{r_1+r_3}{2}$. 
\end{theorem}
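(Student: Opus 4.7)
The plan is to instantiate the D-REX loss in Equation \ref{eqn:D-REX} for the three given trajectories, take a first-order condition in $r_2$, and use strict monotonicity of the logistic function to pin down $r_2=(r_1+r_3)/2$. No machinery beyond one partial derivative is needed.

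First I would enumerate the preference set. Since $\eta_1<\eta_2<\eta_3$, the definition $\mathcal{P}=\{(i,j):\eta_i<\eta_j\}$ yields $\mathcal{P}=\{(1,2),(1,3),(2,3)\}$. Using the identity $\log\tfrac{e^{r_i}}{e^{r_i}+e^{r_j}}=-\log(1+e^{r_j-r_i})$, Equation \ref{eqn:D-REX} simplifies to
\begin{equation*}
L(r_1,r_2,r_3)=\tfrac{1}{3}\Bigl[\log(1+e^{r_2-r_1})+\log(1+e^{r_3-r_1})+\log(1+e^{r_3-r_2})\Bigr].
\end{equation*}

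Next I would differentiate with respect to $r_2$. Only the first and third summands involve $r_2$, so
\begin{equation*}
\frac{\partial L}{\partial r_2}=\frac{1}{3}\left[\frac{e^{r_2-r_1}}{1+e^{r_2-r_1}}-\frac{e^{r_3-r_2}}{1+e^{r_3-r_2}}\right].
\end{equation*}
Setting this to zero and invoking strict monotonicity of $t\mapsto e^t/(1+e^t)$ gives $r_2-r_1=r_3-r_2$, i.e., $r_2=(r_1+r_3)/2$. Strict convexity of $r_2\mapsto L$ (each $\log(1+e^{\,\cdot\,})$ summand is strictly convex in $r_2$) ensures this stationary point is the unique minimizer in the $r_2$ coordinate.

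The one subtlety I would address explicitly is that $L$ depends only on pairwise differences of the $r_i$, so the joint unconstrained minimum over $(r_1,r_2,r_3)$ is degenerate (the infimum is approached by sending the gaps to $-\infty$). The theorem should therefore be read as characterizing the inductive bias that D-REX imposes on $r_2$ \emph{relative to} $r_1$ and $r_3$: whatever values the two extremal cumulative rewards take, the loss forces the middle one to their midpoint, and crucially this conclusion is independent of the actual value of $\eta_2$. Making this interpretation explicit is the main presentational hurdle; the underlying calculation is a one-line first-order condition and directly motivates the critique in Section \ref{subsec:D-REX_assumption} that D-REX treats the noise levels as merely ordinal.
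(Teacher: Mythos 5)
Your proof is correct, but it takes a different route from the paper's. The paper also fixes $r_1$ and $r_3$ and optimizes over $r_2$ alone, but it avoids calculus entirely: it multiplies the two $r_2$-dependent likelihood terms together, divides numerator and denominator by $\exp(r_2)$, and applies the AM--GM inequality $\exp(r_2)+\exp(r_1)\exp(r_3)/\exp(r_2)\geq 2\sqrt{\exp(r_1)\exp(r_3)}$, whose equality condition immediately yields $\exp(r_2)=\sqrt{\exp(r_1)\exp(r_3)}$, i.e., $r_2=(r_1+r_3)/2$. Your first-order-condition argument---reducing each term to $\log(1+e^{r_j-r_i})$, setting $\partial L/\partial r_2=0$, and invoking injectivity of the logistic function plus strict convexity in $r_2$---is equally rigorous and arguably more transparent about \emph{why} the midpoint emerges (the gradient balances two identical sigmoids, forcing equal gaps); it also extends more readily to $n>3$ noise levels, where the same computation forces all consecutive gaps to be equal. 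The AM--GM route buys a self-contained global argument with no appeal to differentiability or convexity. Your explicit remark that the loss depends only on pairwise differences, so the joint unconstrained minimum is degenerate and the theorem must be read as a statement about $r_2$ relative to the endpoints, is a genuine clarification: the paper handles this only implicitly by declaring at the outset that $r_1$ and $r_3$ are held fixed.
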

\begin{proof}

We could view the problem as fixing $r_1$ and $r_3$, and optimize $r_2$ to minimize Equation \ref{eqn:D-REX}. 

Thus, we could write the loss
\begin{align*}
    L&=-\frac{1}{3}\left(\log\frac{\exp(r_1)}{\exp(r_1)+\exp(r_2)}+\log\frac{\exp(r_1)}{\exp(r_1)+\exp(r_3)}+\log\frac{\exp(r_2)}{\exp(r_2)+\exp(r_3)}\right) \\
\end{align*}
To optimize $r_2$, 
\begin{align*}
    r_2^*&=\arg\min_{r_2}L\\
    &=\arg\max_{r_2}\log\frac{\exp(r_1)}{\exp(r_1)+\exp(r_2)}+\log\frac{\exp(r_1)}{\exp(r_1)+\exp(r_3)}+\log\frac{\exp(r_2)}{\exp(r_2)+\exp(r_3)}\\ 
    &=\arg\max_{r_2}\log\frac{\exp(r_1)}{\exp(r_1)+\exp(r_2)}+\log\frac{\exp(r_2)}{\exp(r_2)+\exp(r_3)}\quad\left(\text{Middle-term not dependent on $r_2$}\right)\\
    &=\arg\max_{r_2}\log\frac{\exp(r_1)}{\exp(r_1)+\exp(r_2)}\frac{\exp(r_2)}{\exp(r_2)+\exp(r_3)} \\
    &=\arg\max_{r_2}\log\frac{\exp(r_1)}{\exp(r_2)+\frac{\exp(r_1)\exp(r_3)}{\exp(r_2)}+\exp(r_1)+\exp(r_3)}\quad\left(\text{Divide both by $\exp(r_2)$}\right)\\
\end{align*}
Because $\exp(r_2)+\frac{\exp(r_1)\exp(r_3)}{\exp(r_2)}\geq 2\sqrt{\exp(r_1)\exp(r_3)}$ and the equality holds if and only if  $\exp(r_2)=\sqrt{\exp(r_1)\exp(r_3)}$, 
\small
\begin{align}
\label{eq:proof_ineq}
    \log\frac{\exp(r_1)}{\exp(r_2)+\frac{\exp(r_1)\exp(r_3)}{\exp(r_2)}+\exp(r_1)+\exp(r_3)}\leq \log\frac{\exp(r_1)}{\exp(r_1)+\exp(r_3)+2\sqrt{\exp(r_1)\exp(r_3)}},
\end{align}
\normalsize
the equality in Equation \ref{eq:proof_ineq} only holds when $\exp(r_2)=\sqrt{\exp(r_1)\exp(r_3)}$. Therefore, the loss is minimized when $\exp(r_2)=\sqrt{\exp(r_1)\exp(r_3)}$, which is equivalent to $r_2=\frac{r_1+r_3}{2}$. 
\end{proof}

\clearpage

\section{Different Policy Noise-Performance Relationship}
We show the Noise-Performance relationship of five Atari games under three different RL methods trained RL agents in Figure \ref{fig:big_performance_noise_atari}. Specifically, for MDP and each policy, we first train the policy to convergence in the MDP, and then inject five levels of noise ($0.0, 0.25, 0.50, 0.75, 1.0$) to generate one sub-plot. We show the Noise-Performance relationship of five MuJoCo environments with agents trained under three different RL approaches in Figure \ref{fig:big_performance_noise_mujoco}. We inject 20 noise levels in MuJoCo environments (equal interval noises between $0$ and $1$). 

\begin{figure*}[h]
  \centering
  \includegraphics[width =\linewidth]{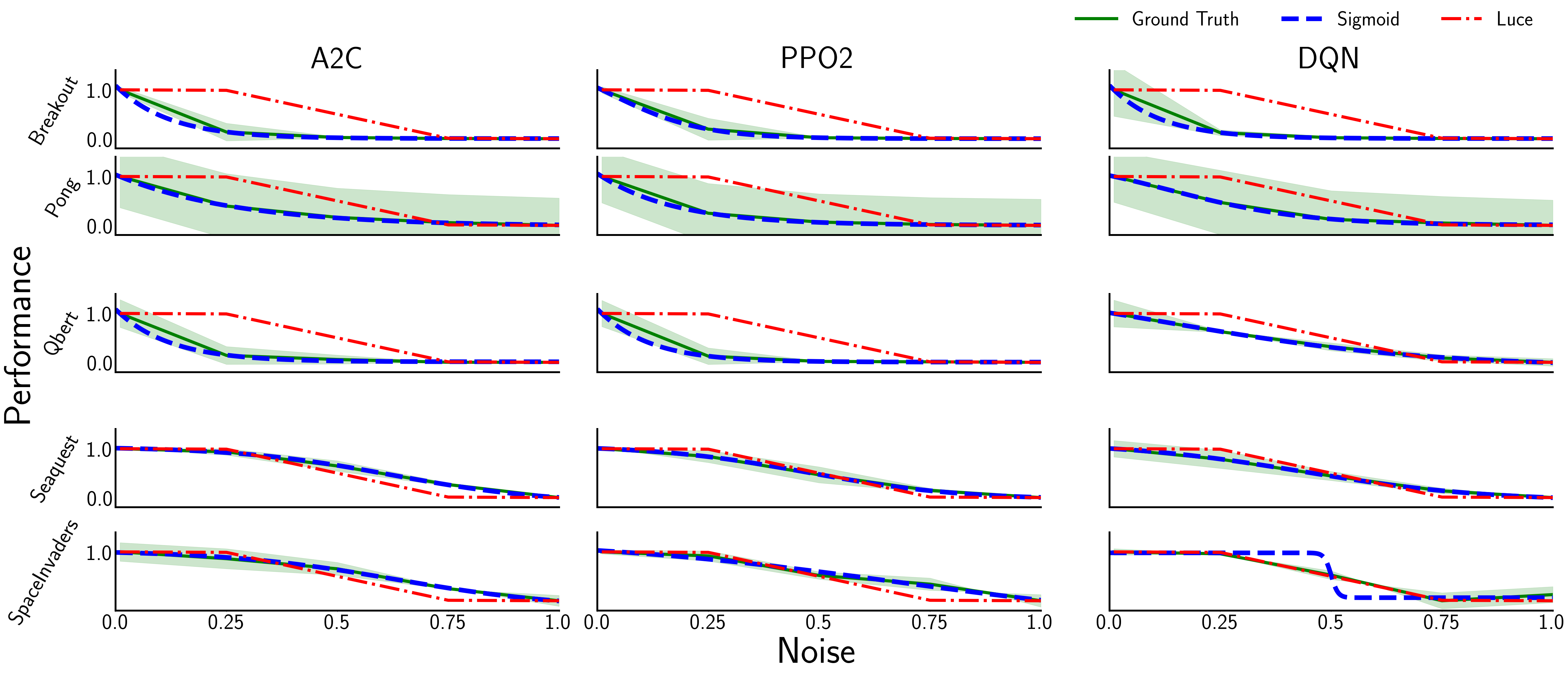}
  \caption{Noise-Performance Relationship in five Atari Environments across three different RL methods trained RL agents. }
  \label{fig:big_performance_noise_atari}
\end{figure*}

\begin{figure*}[h]
  \centering
  \includegraphics[width =\linewidth]{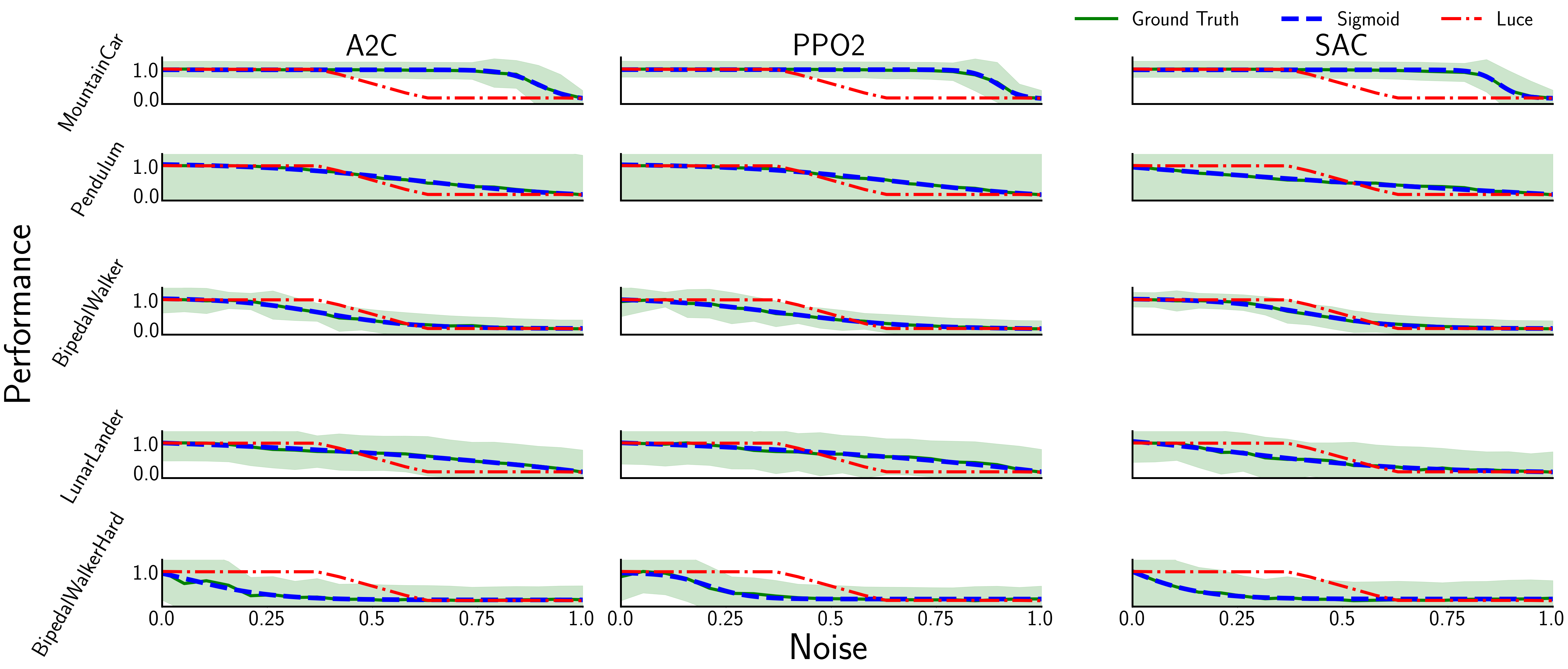}
  \caption{Noise-Performance Relationship in five MuJoCo Environments across three different RL methods trained RL agents. }
  \label{fig:big_performance_noise_mujoco}
\end{figure*}

\clearpage

\section{Simulation Experiment Details}
\subsection{Environment Details}
We note that we modified Hopper-v3 and Ant-v3 environments to disable the option ``terminate\_when\_unhealthy''. If the trajectory terminates once the agent is in an ``unhealthy'' state, the cumulative reward of the trajectory will almost be a linear function of the length of the trajectory, which means any positive constant reward (such as $+1$) could produce a near perfect reward correlation with ground-truth reward. Intuitively, if we give a $+1$ reward for Hopper and enable ``terminate\_when\_unhealthy'', it would learn to hop forward to stay alive because the hopper cannot stand still or hop backwards, which is unintended shortcut for the algorithm and does not produce useful information when comparing algorithms' effectiveness. We disabled  ``terminate\_when\_unhealthy'' for Hopper-v3 and Ant-v3 in both policy training and evaluation for all methods. 

\subsection{Experiment Details}
We identified that both algorithms' performances are highly related to the generated noisy trajectories $\tau^i$, i.e., different random seeds generate different noisy trajectories. In order to make an apple-to-apple comparison between our approach and D-REX, we generated noisy trajectory datasets from three different approaches (BC, AIRL, Noisy-AIRL). For each approach, we generated five noisy datasets, and experimented SSRR and D-REX on the same set of 5 datasets. 

\subsection{Snippet}
In the implementation of D-REX, authors also introduced ``snippets'', which are subsampled variant-length consecutive segments from full trajectories, i.e., 
\begin{align}
\mathbf{s}^k_{i,j}=(s^k_i,a^k_i,s^k_{i+1},a^k_{i+1}, \cdots, s^k_j,a^k_j), 1\leq i< j\leq \text{len}(\tau_k), i,j\in\mathbb{N},
\end{align}
in which $\text{len}$ is the length operator. 
The ranking pair dataset is based on same-length snippets instead of full trajectories to increase the data amount, i.e., 
\begin{align}
\text{for }(i,j):\tau_i\succ\tau_j, \text{we have } \mathbf{s}^i_{i_1,j_1}\succ\mathbf{s}^j_{i_2,j_2}.    
\end{align}
Similarly, we could subsample $\tau_k$ to be snippets and the corresponding cumulative reward learning target should be adjusted by the ratio of length: $\frac{\text{len}(\text{snippet})}{\text{len}(\tau^k)}$, and therefore, 
\begin{align}
    \sum_{t=i}^j{R_\theta(s_t^k, a_t^k)}\leftarrow \frac{\text{len}(\textbf{s}^k_{i,j})}{\text{len}(\tau_k)}\sigma(\eta^k).
\end{align}

\subsection{Implementation Details and Hyperparameters}
We utilized stable-baselines \citep{stable-baselines} implementation of Soft Actor-Critic (SAC \citep{haarnoja2018soft}) to obtain a policy for each environment. We then take partially trained policy to generate suboptimal demonstration, specifically step $36000$, $90000$, and $650000$ for HalfCheetah-v3, Hopper-v3, and Ant-v3, respectively. We also utilized the partially trained SAC agent to generate the unseen test data for reward correlation. For every $1000$ training step of SAC, we saved the agent and generated one trajectory to work as the unseen test data, and thus creating a spectrum of low-to-high performance trajectories. 

For Behavior Cloning (BC) and D-REX, we used D-REX's implementation (\url{https://github.com/dsbrown1331/CoRL2019-DREX}). For AIRL and Noisy-AIRL, hyperparameters are shown in Table \ref{tab:airl_hyperparameters_simulation}. We used noise levels of $[0.0, 0.0, 0.0, 0.0, 0.0, 0.05, 0.10, 0.15, 0.20, 0.25]$ for the ten trajectories within one train iteration. 
\begin{table}[h]
\caption{Hyperparameters for AIRL and Noisy-AIRL in Simulation Domains}
\begin{center}
\begin{tabular}{ccc}
% P{70pt}P{80pt}P{15pt}P{60pt}P{15pt}P{60pt}P{15pt}
\hline
Hyperparameter & Environment & Value \\
\hline
discriminator\_update\_per\_step & All & 10 \\
max\_path\_length & All & 1000 \\
episode\_per\_train\_step & All & 10 \\
$\gamma$ & All & 0.99 \\
GAE $\lambda$ & HalfCheetah, Hopper & 0.95 \\
GAE $\lambda$ & Ant & 0.97 \\
TRPO K-L step size & HalfCheetah, Ant & 0.01 \\
TRPO K-L step size & Hopper & 0.005 \\
TRPO conjugate gradient steps & HalfCheetah, Hopper & 15 \\
TRPO conjugate gradient steps & Ant & 10 \\
Train steps & HalfCheetah, Hopper & 1000 \\
Train steps & Ant & 2000 \\
\hline
\label{tab:airl_hyperparameters_simulation}
\end{tabular}
\end{center}
\end{table}

For SSRR, we utilized snippets length of $50-500$ in our implementation (trajectory length is $1000$) and L2 regularization weight of $0.1$. 

\end{document}